\newtheorem{theorem}{Theorem}
\newtheorem{lemma}[theorem]{Lemma}
\newenvironment{customthm}[1]
  {\innercustomthm}
  {\endinnercustomthm}
\def\longmethod{\textbf{D}istillation-\textbf{R}esistant \textbf{W}atermarking\xspace}
\def\method{DRW\xspace}
\title{Distillation-Resistant Watermarking for Model Protection in NLP}
\author{Xuandong Zhao ~~~ Lei Li  ~~~ Yu-Xiang Wang \\
  University of California, Santa Barbara\\
  \texttt{\{xuandongzhao,leili,yuxiangw\}@cs.ucsb.edu} \\
}
\begin{document}
\maketitle

\begin{abstract}
% The first sentence: define the problem. 
% Second sentence: existing methods have limitations or define challenge to solve the problem. 
% Third sentence: In this paper, we propose \method, a novel method to xxxx. 
% Fourth sentence: key insight or intuitive or novelty of the method. 
% Fifth sentence: We evaluate \method on xxx public datasets. 
% Experiments show that our proposed \method achives the state of the art with xxx score on ....

How can we protect the intellectual property of trained NLP models? Modern NLP models are prone to stealing by querying and distilling from their publicly exposed APIs. However, existing protection methods such as watermarking only work for images but are not applicable to text. We propose \longmethod(\method), a novel technique to protect NLP models from being stolen via distillation. \method protects a model by injecting watermarks into the victim's prediction probability corresponding to a secret key and is able to detect such a key by probing a suspect model. We prove that a protected model still retains the original accuracy within a certain bound. We evaluate \method on a diverse set of NLP tasks including text classification, part-of-speech tagging, and named entity recognition. Experiments show that \method protects the original model and detects stealing suspects at 100\% mean average precision for all four tasks while the prior method fails on two\footnote{Our code is available at \url{https://github.com/XuandongZhao/DRW}}.   

%Training modern NLP models can be costly, demanding a massive amount of training data, powerful computing resources, and human expertise. Nevertheless, profitable NLP models are also likely to be stolen via model extraction, where an adversary utilizes knowledge distillation to train a surrogate model based on the results generated by a prediction API of the original model. In this paper, we propose \method, a novel watermarking technique against model extraction attacks. By injecting designed noise into the output of the original model, we are able to verify whether a suspect model steals the victim model. We evaluate \method on text classification and token classification tasks. The empirical results corroborate that the defender can claim model ownership with 100\% mean average precision and that \method outperforms the state-of-the-art baselines. 

% NLP APIs have become essential  services in many commercial companies. Protecting NLP models from model stealing attack remains a challenging problem for the discrete nature of the text representation.

\end{abstract}

\section{Introduction}
\label{sec:intro}
Large-scale pre-trained  neural models have shown great success in NLP tasks \cite{devlin-etal-2019-bert, liu2019roberta}. 
Task-specific NLP models are often deployed as web services with pay-per-query APIs in business applications.
Protecting the intellectual property of these cloud deployed models is a critical issue in both research and practice. 
Service providers often use authentication mechanism to authorize valid accesses. 
However, while this prevents clients directly copying a victim model, it does not hinder clients from stealing it using distillation. 
Emerging model extraction attacks have demonstrated convincingly that most functions of the victim API are likely to be stolen with carefully designed queries \cite{Tramr2016StealingML, Wallace2020ImitationAA, krishna2020thieves, He2021ModelEA}. 
A model extraction process is often imperceptible because it queries APIs in the same way as a normal user does \cite{Orekondy2019KnockoffNS}.
In this paper, we study the problem of \emph{model protection} for NLP against distillation stealing. 

%Training large-scale NLP models is quite expensive, which requires not only large-scale datasets but also massive computational resources. The training cost can grow rapidly with task complexity and model capacity. For instance, it can cost \$1.6 million to train a BERT model on Wikipedia and Book corpora (15 GB) \cite{Sharir2020TheCO}. Due to the underlying commercial value, intellectual property (IP) protection for Machine-learning-as-a-service (MLaaS) models has attracted increasing interest from both academia and  industry.

Little has been done to adapt watermarking to identify model infringements in language tasks.
Although a number of defense techniques have been proposed to prevent the model extraction for computer vision, they are not applicable to language tasks with discrete tokens. 
Among them, deep neural networks (DNN) watermarking \cite{Szyller2021DAWNDA,jia2021entangled} works by embedding a secret watermark (e.g., logo or signature) into the model exploiting the over-parameterization property of DNNs. 
This procedure leverages a trigger set to stamp invisible watermarks on their commercial models before distributing them to customers. 
When suspicion of model theft arises, model owners can conduct an official ownership claim with the aid of the trigger set. 
However, these protections all focus on the image/audio tasks, since it is easy to modify the continuous data. 
In addition, most watermarking methods are invasive and fragile. They cannot avoid tampering with the training procedure in order to embed the watermark. Besides, the watermarks are outliers of the task distribution so that the adversary may not carry the watermark through distillation. 

To fill in the gap, we make the first attempt to protect NLP models from distillation. 
We propose \longmethod(\method) to protect models and detect suspicious stealing. 
Inspired by the idea from CosWM for computer vision \cite{charette2022cosine}, 
we utilize prediction perturbation to embed a secret sinusoidal signal to the output of the victim API. 
To handle discrete tokens, we design a technique to randomly project tokens to a uniform region within sinusoidal cycles. 
We design watermarking effective for distillation with soft labels and with hard-sampled labels. 
As long as the adversary trains the distillation procedure till convergence, \method is able to detect the watermark signal from the extracted model. 

The advantages of \method include 1) \emph{training independence}: it works directly on the trained models and can be directly plugged into the final output. 2) \emph{flexibility}: it can be applied to both soft-label output and hard-label output in the black-box setting. 3) \emph{effectiveness}: we evaluate the effectiveness of \method and obtain perfect model extraction detection accuracy; we also justify the fidelity with a negligible side effect on the original classification quality. 4) \emph{scalability}: the secret keys for the watermark are randomly generated on the fly so that we are able to provide different watermarks for different end-users and verify them.

The contributions of this paper are as follows:
\begin{itemize*} 
    \item We enhance the concept of model protection against model extraction attacks with an emphasis on language applications.
    \item We propose \method, a novel method to inject watermarks to the output of the NLP models and later to detect if suspects distill from the victim.
    \item We provide a theoretical guarantee on the protected API accuracy --- with protection \method does not harm much of original API's performance.  
    \item Experiments on four diverse tasks (POS Tagging/NER/SST-2/MRPC) verify that \method detects extracted models with 100\% mean average precision, yet with only a small drop (<5\%) in original prediction performance.
\end{itemize*}

\section{Related Work}
\label{sec:related}
\noindent\textbf{Model Extraction Attacks}~
Model extraction attacks target the confidentiality of ML models and aim to imitate the function of a black-box victim model \cite{Tramr2016StealingML, Orekondy2019KnockoffNS, correia2018copycat}. First, adversaries collect or synthesize an initially unlabeled substitute dataset. Next, they exploit the ability to query the victim model APIs for label predictions to annotate the substitute dataset. Then, they can train a high-performance model utilizing the pseudo-labeled dataset. Recently, several works \cite{krishna2020thieves, Wallace2020ImitationAA, He2021ModelEA} attempt to address the model extraction attacks on NLP models, e.g. BERT \cite{devlin-etal-2019-bert} or Google Translate.

\noindent\textbf{Knowledge Distillation}~
Model extraction attacks are closely related to knowledge distillation (KD) \cite{Hinton2015DistillingTK}, where the adversary acts as the student who approximates the behaviors of the teacher (victim) model. The student can learn from soft labels or hard labels. KD with soft labels has been widely applied due to the fact that soft labels can carry a lot of useful information \cite{phuong2019towards, Zhou2021RethinkingSL}.

\noindent\textbf{Watermarking}~
A digital watermark is an undetected label embedded in a noise-tolerant signal, such as audio, video, or image data. It is designed to identify the owner of the signal's copyright. Some works \cite{Uchida2017EmbeddingWI, Adi2018TurningYW, Zhang2018ProtectingIP, Merrer2019AdversarialFS} employ watermarks to prevent precise duplication of machine learning models. They insert watermarks into the parameters of the protected model or construct backdoor images that activate particular predictions. If an adversary exactly copies a protected model, a watermark can be used to verify ownership. However, safeguarding models from model extraction attacks is more difficult due to the fact that the parameters of the suspect model might be vastly different from those of the victim model, and the backdoor behavior may not be transferred to the suspect model either. Several works \cite{Juuti2019PRADAPA, Szyller2021DAWNDA, jia2021entangled, charette2022cosine, He2021ProtectingIP} study how to identify extracted models that are distilled from the victim model. \citet{jia2021entangled} forces the protected model to acquire features for identifying data samples taken from authentic and watermarked data. \citet{He2021ProtectingIP} conducts lexical modification as a watermarking method to protect language generation APIs. CosWM \cite{charette2022cosine} incorporates a watermark as a cosine signal into the output of the protected model. Since the cosine signal is difficult to eliminate, extracted models trained via distillation will continue to have a significant watermark signal. Nonetheless, CosWM only applies to image data and soft distillation. We design multiple new techniques to extend CosWM in handling the text data with discrete sequence and we provide a theoretical guarantee on the protected API accuracy for soft and hard distillations

\section{Proposed Method: \method}
\label{sec:approach}
\begin{figure*}[t]
\centering
% \includesvg[width=1.0\linewidth]{fig/model.svg}
\includegraphics[width=1.0\textwidth]{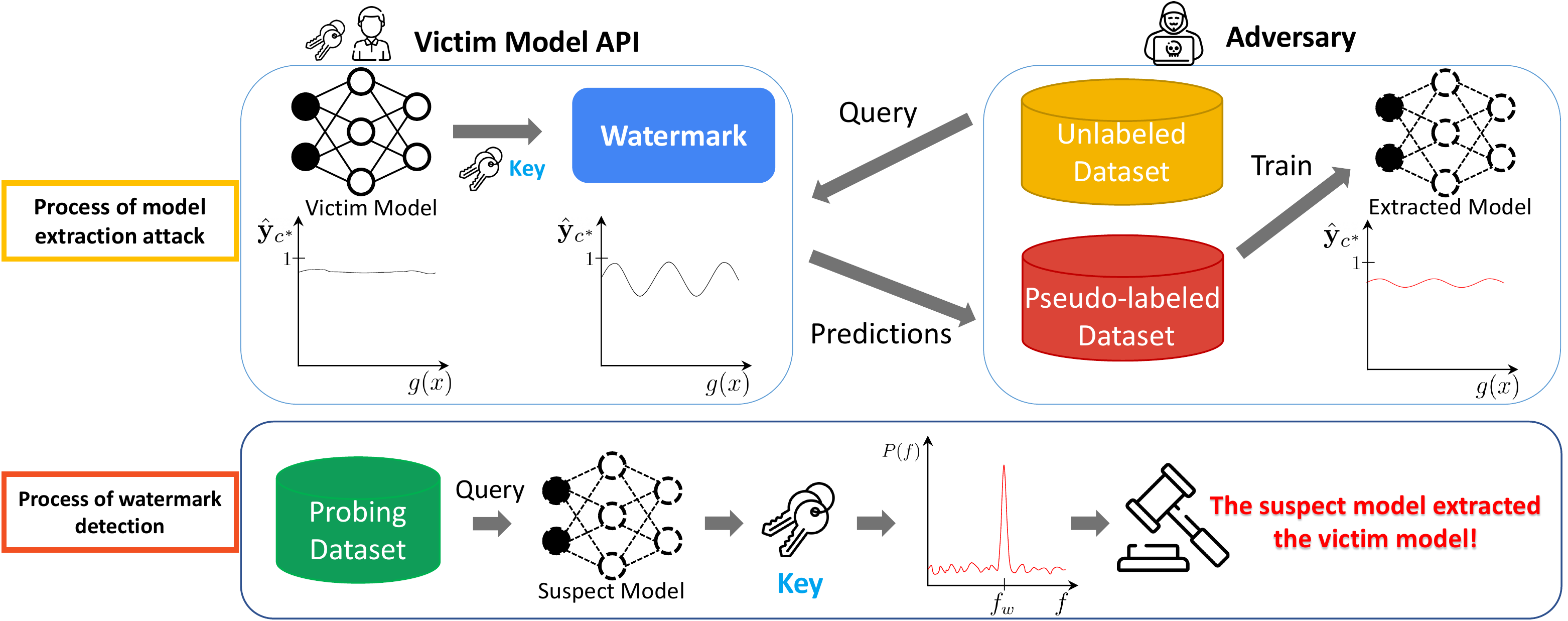}
\caption{Overview of model extraction attack and watermark detection. The upper panel illustrates that the API owner adds a sinusoidal perturbation to the predicted probability distribution before answering end-users. The extracted model will convey this periodical signal if the adversary distills the victim model. At the phase of watermark detection, as shown in the bottom panel, the owner queries the suspect model and applies Fourier transform to the output with a key. Then the designed perturbation can be detected when a peak shows up in the frequency domain at $f_w$. The extracted watermark can thus serve as legal evidence and judgment for the ownership claim.}
\label{fig:overview}
\vspace{-4mm}
\end{figure*}

\subsection{Overview}
Figure \ref{fig:overview} presents an overview of distillation procedure, watermarking and detection. The main idea of \method is to introduce a perturbation to the output of a protected model. This designed perturbation is transferred onto a suspect model distilled from a victim model that remains identifiable by probing the suspect model. 

\noindent\textbf{Problem Formulation}~ We consider a common real-world scenario that the adversary only has black-box access to the victim model's API $\mathcal{V}$. There exist two types of output from victim model API: soft (real-valued) labels (i.e. probabilities) and hard labels. The adversary employs an auxiliary unlabeled dataset to query $\mathcal{V}$. Once the adversary gains the predictions from the victim model, it can train a separate model $\mathcal{S}$ from scratch with the pseudo-labeled dataset. The adversary may either distill the victim model with hard labels by minimizing the cross-entropy loss
\begin{equation}
\mathcal{L}_{\text{CE}}=-\sum_{i=1}^{m} \hat{\mathbf{y}}_{i} \log \left(\hat{\mathbf{q}}_{i}\right),
\end{equation}
where $\hat{\mathbf{q}}_i$ is the prediction from the stealer's model and $\hat{\mathbf{y}}$ are the pseudo-labels from the victim model; or distill from soft labels by minimizing the Kullback–Leibler (KL) divergence loss
\begin{equation}
\mathcal{L}_{\text{KL}} = \sum_{i=1}^{m} \hat{\mathbf{y}}_{i} \log \left(\frac{\hat{\mathbf{y}}_{i}}{\hat{\mathbf{q}}_{i}}\right).
\end{equation} 

% The extracted model is considered as a successful stealing if its performance matches the victim model. 

\subsection{Watermarking the Victim Models}
\method dynamically embeds a watermark in response to queries made by an API's end-user. We use a set of variables to represent key $K = (c^*, f_w, \mathbf{v}_k, \mathbf{v}_s, \mathbf{M})$, where $c^*\in \{1, \dots, m\}$ is the target class to embed watermark; $f_w \in \mathbb{R}$ is the angular frequency; $\mathbf{v}_k \in \mathbb{R}^n$ is the phase vector; $ \mathbf{v}_s \in \mathbb{R}^n$ is the selection vector; $\mathbf{M}\in \mathbb{R}^{|D|\times n}$ is the random token matrix. $|D|$ represents the vocabulary size, so that every token ID corresponds to vector $\mathbf{M}_i \in \mathbb{R}^n$.  Following \citet{charette2022cosine}, we define a periodic signal function based on $K$ and the input $x$.
\begin{equation}
\mathbf{z}_{c}(x)=\left\{\begin{array}{lr}
\cos \left(f_{w} g(\mathbf{v}_k, x, \mathbf{M})\right), &  c=c^{*} \\
\cos \left(f_{w} g(\mathbf{v}_k, x, \mathbf{M})+\pi\right), & c\neq c^{*}
\end{array}\right.
\end{equation}
for $c \in \{1, \dots, m\}$, where $g(\cdot)\in [0, 1)$ is a hash function projecting a text representation to a scalar. Ideally, the scalar should uniformly distribute spanning multiple cycles.

\paragraph{Constructing the hash function} We project every input $x$ into the fixed scalar range to add the sinusoidal perturbation by the hash function $g(\cdot)$. We randomly generate the phase vector $\mathbf{v}_k$, selection vector $\mathbf{v}_s$ and the token matrix $\mathbf{M}$. Each element in $\{\mathbf{v}_k, \mathbf{v}_s\}$ is randomly sampled from a uniform distribution over $[0, 1)$. Each element of the matrix $\mathbf{M}$ is randomly sampled from a standard normal distribution $\mathbf{M}_{ij} \sim \mathcal{N}(0, 1)$. Let $\mathbf{M}_{i} \in \mathbb{R}^n$ denote the $i$-th row of matrix $\mathbf{M}$, $\mathbf{v}_k^\top\mathbf{M}_{i} \sim \mathcal{N}(0, \frac{n}{3})$ and $\mathbf{v}_s^\top\mathbf{M}_{i} \sim \mathcal{N}(0, \frac{n}{3})$ (we prove it in Appendix \ref{lemma1}). Then we apply probability integral transformation to obtain the uniform distribution of the hash values, where $g(\mathbf{v}_k, x, \mathbf{M})\sim \mathcal{U}(0, 1)$ and $g(\mathbf{v}_s, x, \mathbf{M})\sim \mathcal{U}(0, 1)$. We set $g(\mathbf{v}_s, x, \mathbf{M}) \leq \tau$ to select part of all samples, where $\tau$ is the data selection ratio. When implementing sequence labeling tasks, we use the token ID to fetch the vector in matrix $\mathbf{M}$. Similarly, when implementing sentence classification tasks, we use the ID of the second token in the sentence to obtain the vector.

Next we compute the periodic signal for the victim output
\begin{equation}
\label{eq:softmax}
\small
\setlength{\tabcolsep}{2pt} 
\hat{\mathbf{y}}_c = 
\left\{\begin{array}{lr}
\hat{\mathbf{p}}_{c}, & g(\mathbf{v}_s, x , \mathbf{M}) > \tau \\
\frac{\hat{\mathbf{p}}_{c}+\varepsilon\left(1+\mathbf{z}_{c}(x)\right)}{1+2 \varepsilon}, &  c=c^{*} \text{~and~} g(\mathbf{v}_s, x, \mathbf{M}) \leq \tau  \\ \frac{\hat{\mathbf{p}}_{c}+\frac{\varepsilon\left(1+\mathbf{z}_{c}(x)\right)}{m-1}}{1+2 \varepsilon}, & c\neq c^{*} \text{~and~} g(\mathbf{v}_s, x, \mathbf{M}) \leq \tau 
\end{array}\right.
\end{equation}
where $\varepsilon$ is the watermark level for the periodic signal and $\hat{\mathbf{p}}_{c}$ is the victim model's prediction before watermarking. Since $0 \leq \hat{\mathbf{y}}_i\leq 1$ and $\sum_{i=1}^{m} \hat{\mathbf{y}}_i = 1$ (see proof in Appendix \ref{lemma2}), $\hat{\mathbf{y}}$ is a surrogate for softmax output. 

In the soft label setting, the victim model generates output $\hat{\mathbf{y}}$ directly; while in the hard label setting, the victim model produces the sampling hard label, i.e. a one-hot label with probability $\hat{\mathbf{y}}_i$ for each class $i$. Intuitively, the hard-label sampled output retains the watermark because it is equal to $\hat{\mathbf{y}}$ in \emph{expectation}. Further, we define the accuracy for soft label output, named ``argmax soft'', which calculates the accuracy of the argmax of soft output compared with the true label. Similarly, we define ``sampling hard'' to describe the output of the victim model which is a one-hot vector.

% \begin{algorithm}[tb]
% \SetAlgoLined
% \SetKwInOut{Input}{Input}
% \Input{Dataset $D$ (after tokenization / splitting), labelling policies $\pi,\pi_c$, number of epochs $T$ }
% $D' \leftarrow \textrm{Dedup}(D)$\\
% $D'' \leftarrow \textrm{Redact}_\pi(D')$\\
%  $D^{pri}\leftarrow \{ s \in D'' | \exists x\in s \text{ s.t. } \pi(s, x)=1 \text{ or } \exists x\subset s \text{ s.t. }\pi_c(s, x) = 1  \}$ \\
%  $D^{pub} \leftarrow  \{s\in D'' | s\notin D^{pri}\}$\\ 
% \For{$e = 1,...,T$}
%  {
%  Run one epoch of SGD with $D^{pub}$\\
%  Run one epoch of DP-SGD with $D^{pri}$
%  }
% \caption{Embedding Watermarks to the Victim Model}\label{alg:alg_1}
% \end{algorithm}

\subsection{Detecting Watermark from Suspect Models}
We first create a probing dataset $\mathcal{D}_p$, for which the labels are not required. $\mathcal{D}_p$ can be drawn from the training data of the extracted model since the owner is able to store any query sent by a specific end-user. In our setting, we also allow $\mathcal{D}_p$ to be drawn from other distributions.

We employ the Lomb-Scargle periodogram method \cite{Scargle1982StudiesIA} for detecting and characterizing periodic signals. The Lomb–Scargle periodogram yields an estimate of the Fourier power spectrum $P(f)$ at frequency $f$ in an unevenly sampled dataset. After getting the power spectrum, we evaluate the signal strength by calculating the signal-to-noise ratio
\begin{gather}
P_{\text {signal }}=\frac{1}{\delta} \int_{f_{w}-\frac{\delta}{2}}^{f_{w}+\frac{\delta}{2}} P(f) df \notag \\ 
\text{\footnotesize $P_{\text {noise }}=\frac{1}{F-\delta}\left[\int_{0}^{f_{w}-\frac{\delta}{2}} P(f) d f+\int_{f_{w}+\frac{\delta}{2}}^{F} P(f) d f\right]$} \notag \\
P_{\text {snr}}=P_{\text {signal }} / P_{\text {noise }},
\end{gather}

where $\delta$ controls the window width of $\left[f_{w}-\frac{\delta}{2}, f_{w}+\frac{\delta}{2}\right]$; $F$ is the maximum frequency, and $f_{w}$ is the angular frequency embedded into the victim model.
A higher signal-to-noise ratio $P_{\text {snr}}$ indicates a higher peak in the frequency domain.

\section{Theoretical Analysis}
\label{sec:exps}
%\citet{charette2022cosine} has already provided theoretical bounds on the strengths of the watermark. 
In this section, we provide theoretical guarantees for \method for both argmax soft output and sampling hard output. The analysis assumes the victim is \emph{calibrated} so its soft-predictions are informative. We also focus on the binary classification task, i.e., $m=2$. Generalization to $m>2$ is straightforward and omitted only to ensure a clean presentation.
\begin{customthm}{1} \label{thm:acc}
Without loss of generality, set target class $c^* = 1$, so that $\hat{p} = \hat{\mathbf{p}}_1(x), \hat{y} = \hat{\mathbf{y}}_1, z(x) = \mathbf{z}_1(x)$.
Assume $\hat{p}(x)$ is calibrated, i.e., $\mathbb{E}[y|\hat{p}(x)=a]=a, ~\forall 0\leq a\leq 1$, the argmax soft label of the victim model is $\hat{y}_s = \mathbbm{1}\{\frac{\hat{p}(x)+\varepsilon(1+z(x))}{1+2\varepsilon} > 0.5\}$ and the sampling hard label of the victim output is $\hat{y}_h \sim \operatorname{Ber}(\frac{\hat{p}(x)+\varepsilon(1+z(x))}{1+2\varepsilon})$. For a fixed $\mathbf{v}_k$, given that 
$z(x) = \cos \left(f_{w} g(\mathbf{v}_k, x, \mathbf{M})\right)\in [-1, 1]$ and the data selection ratio is set to $\tau$, then 
\method argmax soft label and sampling hard label satisfy: 
\begin{align}
    \mathbb{E}_{\mathbf{v}_k}&\left[ \text{Acc}(\text{Argmax Soft}) \right] \geq \text{Acc}(\text{Victim})\notag\\
    & - \tau(0.5+\varepsilon)\mathbb{P}[0.5-\varepsilon\leq \hat{p}\leq 0.5+\varepsilon],\label{eq:acc_soft}\\
    %\notag\\
    \mathbb{E}_{\mathbf{v}_k}&\left[\text{Acc}(\text{Sampling Hard})\right] \geq (1-\tau)\text{Acc}(\text{Victim})\notag\\
    & + \frac{\tau}{1+2\varepsilon} \mathbb{E}\left[2\hat{p}^2 -2\hat{p} + 1\right].\label{eq:acc_hard}
\end{align}
\end{customthm}
The proof is deferred to Appendix \ref{thm:A1}.
% \begin{proof}
% See Appendix \ref{thm:A1} for a proof.
% \end{proof}
% \begin{theorem}\label{thm:soft}
% With target class $c^* = 1$, set $\hat{p} = \hat{\mathbf{p}}_1(x), \hat{y} = \hat{\mathbf{y}}_1, z(x) = \mathbf{z}_1(x)$.
% Assume $\hat{p}(x)$ is calibrated, i.e., $\mathbb{E}[\mathbb{P}(y=1|x)|\hat{p}(x)=a]=a, ~\forall 0\leq a\leq 1$, and $\hat{y} = \mathbbm{1}\{\frac{\hat{p}(x)+\varepsilon(1+z(x))}{1+2\varepsilon} > 0.5\}$. Given that 
% $z(x) = \cos \left(f_{w} g(\mathbf{v}_k, x, \mathbf{M})\right) \in [-1, 1]$,
% \method argmax soft label has the following bounds with a fixed $\mathbf{v}_k$: 
% \begin{align*}
%     \text{Acc}(&\text{Argmax Soft}) \geq \text{Acc}(\text{Victim Model})\\
%     & - (0.5+\varepsilon)\mathbb{P}(0.5-\varepsilon\leq \hat{p}\leq 0.5+\varepsilon)
% \end{align*}
% \end{theorem}
%Observe that the accuracy gracefully drops from the accuracy of the victim model as $\epsilon$ and $\tau$ get larger. In the soft level setting, only those data points where $\hat{p}$ lies around 0.5 ($\pm \varepsilon$) could be affected --- data points where the model prediction is uncertain anyways.

Equation \eqref{eq:acc_soft} says that, in the soft label setting, \method does not hurt the accuracy too much if the watermark level $\varepsilon$ is small. Note that only samples in which the victim model output lies around 0.5 ($\pm \varepsilon$) might be affected by the watermarking. These are data points where the victim model is uncertain and inaccurate anyway.
%the level of accuracy. 

% We then show the theorem about the accuracy of sampling hard output of the victim model in binary classification task. 
% \begin{theorem}\label{thm:hard} With target class $c^* = 1$, set $\hat{p}(x) = \hat{\mathbf{p}}_1(x), \hat{y} = \hat{\mathbf{y}}_1, z(x) = \mathbf{z}_1(x)$.
% Assume $\hat{p}(x)$ is calibrated, i.e., $\mathbb{E}[y|\hat{p}(x)=a]=a, ~\forall 0\leq a\leq 1$,  and hard label $\hat{y} \sim \operatorname{Ber}(\frac{\hat{p}(x)+\varepsilon(1+z(x))}{1+2\varepsilon})$. Given that 
% % $\mathbf{z}(x) = \operatorname{cos}\left(2\pi f_w F^{-1}\left(\frac{v^\top\phi(x)}{\lVert \phi(x) \rVert}\right)\right)$.
% $z(x) = \cos \left(f_{w} g(\mathbf{v}_k, x, \mathbf{M})\right)$,  and $g$ is the distribution function, which is subject to $g(\mathbf{v}_k, x, \mathbf{M}) \sim \mathcal{U}(0, 1)$, 
% \method sampling hard label has the following bounds with a fixed $\mathbf{v}_k$: \\
% $$\text{Acc(Sampling Hard)} \geq \frac{1}{1+2\varepsilon} \mathbb{E}\left[2\hat{p}^2 -2\hat{p} + 1\right]$$
% \end{theorem}

Equation \eqref{eq:acc_hard} lowerbounds the accuracy of the sampled hard labels, which is close to the vanilla victim model if $\tau$ is small. Observe that if $\tau = 1$, the accuracy may drop even if the watermark magnitude $\varepsilon=0$ due to the sampling of the output label\footnote{Under the calibration assumption,
$Acc(Victim)= \mathbb{E}\left[\hat{p}\mathbf{1}(\hat{p}\geq0.5) + (1-\hat{p})\mathbf{1}(\hat{p}<0.5)\right]$, which is strictly bigger than $\mathbb{E}\left[2\hat{p}^2 -2\hat{p} + 1\right]$ except when $\hat{p}$ is supported only at trivial points $\{0,1,0.5\}$}. Our design of a second random projection $\mathbf{v}_s$ plays an important role here as it allows us to control the accuracy drop to any level we desire by adjusting $\tau$.

%Note that the accuracy of sampling hard output is closely related to the watermark level $\varepsilon$ and the prediction value $\hat{p}$ before adding watermark. Consider a high performance victim model with $\hat{p}$ close to 1, the accuracy of sampling hard can be very high if we choose a small watermark level $\varepsilon$.

\section{Experiments}
\label{sec:exps}
\begin{table}[t]
\centering
\setlength{\tabcolsep}{4pt} 
\begin{tabular}{lcccc}
\Xhline{2\arrayrulewidth} 
Model Type & SST-2 & MRPC & POS & NER \\
\hline
\multicolumn{5}{l}{mAP of detection for soft distillation:}\\
% \multirow{2}{*}{\shortstack{mAP of soft \\ distillation}} 
% & DeepJudge* & 0.206 & 0.206 & 0.254  &  0.280   \\
~~~DeepJudge* & 1.00 & 1.00 & 0.54  &  0.84 \\
~~~\method & 1.00 & 1.00 & 1.00  & 1.00  \\
\hline
\multicolumn{5}{l}{mAP of detection for hard distillation:}\\
% \multirow{2}{*}{\shortstack{mAP of hard \\ distillation}} 
% & DeepJudge* & 0.206 & 0.206 &  0.269   &  0.216  \\
~~~DeepJudge* & 1.00 & 1.00 &  0.48   &  0.40  \\
~~~\method & 1.00 & 1.00  & 1.00  & 1.00  \\
\hline 
\multicolumn{5}{l}{Performance of the models:}\\
% \multirow{5}{*}{\shortstack{Performance of \\ the models}} & 
~~BERT & 92.9 & 86.7 & - & 92.4 \\
~~Victim model & 92.8 & 87.0 & 90.7 & 91.3 \\
~~~~+argmax soft & 92.5 & 86.8 & 90.7 & 91.3 \\
~~~~+sampling hard & 88.4 & 85.8 & 90.3 & 91.0 \\

~~Adversary soft & 92.0 & 86.2 & 89.8 & 87.7 \\
~~Adversary hard & 91.3 & 86.1 & 89.7 & 87.4 \\
\Xhline{2\arrayrulewidth} 
\end{tabular}
\caption{Main results for detection and model performance. We report the mean average precision of the model infringements detection for both soft-label distillation and hard-label distillation. The baseline is constructed based on the modification of DeepJudge. We show the results for BERT reported in the original paper. We report the results of victim model for argmax soft and sampling hard. }
\vspace{-4mm}
\label{table:main}
\end{table}

\begin{table*}[t]
\centering
\begin{tabular}{lccccc}
\Xhline{2\arrayrulewidth} 
& SST-2 & MRPC & POS & NER \\
\hline
DeepJudge-$JSD$-Soft: \\
~~~Negative Suspect & (0.012, 0.032) & (0.009, 0.161) & (0.016, 0.444)  &  (0.001, 0.416)  \\
~~~Positive Suspect & (0.001, 0.002) & (0.001, 0.002) & (0.087, 0.279)  & (0.002, 0.201) \\
\hline 
DeepJudge-$JSD$-Hard: \\
~~~Negative Suspect & (0.013, 0.029) & (0.008, 0.154) & (0.010, 0.432)  &  (0.009, 0.274)   \\
~~~Positive Suspect & (0.004, 0.005) & (0.003, 0.007) & (0.029, 0.112) & (0.011, 0.052) \\
\hline
\method-$P_{\text{snr}}$-Soft: \\
~~~Negative Suspect & (0.008, 4.775) & (0.128, 2.607) & (0.012, 2.309) & (0.105, 4.243)   \\
~~~Positive Suspect & (18.82, 25.77) & (17.81, 24.25) & (20.59, 28.73) & (17.25, 25.22) \\
\hline
\method-$P_{\text{snr}}$-Hard: \\
~~~Negative Suspect & (0.011, 4.235) & (0.012, 3.678) & (0.182, 2.869)  &  (0.203, 4.183) \\
~~~Positive Suspect & (16.38, 22.77) & (16.70, 21.80) & (16.23, 25.67) &  (16.19, 25.49)\\
\Xhline{2\arrayrulewidth} 
\end{tabular}
\caption{The probing results for DeepJudge and \method in soft distillation and hard distillation settings. We present the range of $JSD$ and $P_{\text{snr}}$. The first value in parentheses is the minimum score and the second value is the maximum score. A larger gap in score between the negative and positive suspect models indicates that the detection method performs better in identifying the extracted model.}
% \vspace{-4mm}
\label{table:new}
\end{table*}

\subsection{Tasks}\label{sec:task}
We evaluate the performance of \method on four different tasks. Two are sequence labeling tasks, Part-Of-Speech (POS) Tagging and Named Entity Recognition (NER); the other two are from GLUE \cite{Wang2018GLUEAM} text classification tasks, SST-2 and MRPC. We choose BERT \cite{devlin-etal-2019-bert} as our model backbone and fine-tune it in different tasks.

\paragraph{Sequence labeling} We utilize the CoNLL-2003 dataset \cite{Sang2003IntroductionTT} for POS Tagging and NER tasks. The CoNLL-2003 dataset consists of news articles from the Reuters
RCV1 corpus with POS and NER tags. We formulate POS Tagging and NER as token-level classification tasks following standard practice. Specifically, POS Tagging has 47 classes and NER has 9 classes. We take the token embedding of the last hidden layer of BERT \cite{devlin-etal-2019-bert} as the input to a linear layer, which is then used as the classifier over the POS/NER label set. The token ID is set as the input $x$ for the hash function $g(\cdot)$. F1 score is hired for the evaluation metric. 

\paragraph{Text classification} SST-2 is a binary single-sentence classification task consisting of movie reviews with corresponding sentiment \cite{Socher2013RecursiveDM}. MRPC is a collection of sentence pairs from online news with labels suggesting whether the pair is semantically equivalent or not \cite{Dolan2005AutomaticallyCA}. We use the final hidden vector of the special \texttt{[CLS]} token of BERT as the input to a linear layer, which serves as the sentence classifier. The ID of the second token in the sentence is set as the input $x$ for the hash function $g(\cdot)$. Since GLUE does not include any test dataset, we use accuracy of the validation set as the evaluation metric.

For each task, we train the protected model to achieve the best performance on the validation set. As demonstrated in Table \ref{table:main}, the victim model has comparable performance to BERT \cite{devlin-etal-2019-bert}. For soft and hard label distillation, we split the training data in each task into two parts and use the first half to query the victim model. Then the extracted model is trained for 20 epochs on the pseudo-labeled dataset. We choose the same key $K = (c^*, f_w, \mathbf{v}_k, \mathbf{v}_s, \mathbf{M})$, where frequency $f_w = 16.0$, watermark level $\varepsilon = 0.2$ and $\{\mathbf{v}_k, \mathbf{v}_s, \mathbf{M}\}$ are generated with different random seed. We set target class $c^*=22$ (``NNP'' tag) for POS Tagging, $c^*=2$ (``I-PER'' tag) for NER and $c^*=0$ (``negative''  class) for SST-2/MRPC. We set data selection ratio $\tau = 0.5$ to add watermarks to half of the output data. More details for the experiment setting can be found in Appendix \ref{sec:details}.

\begin{figure*}[h]
\centering
\includegraphics[width=1.0\textwidth]{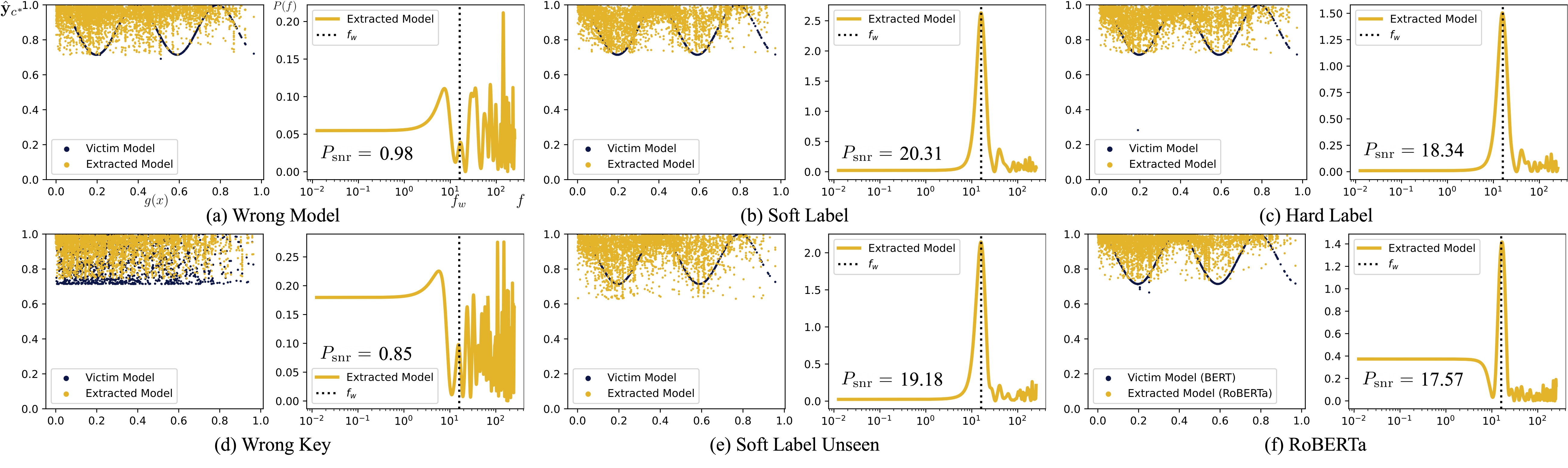}
% \includesvg[width=1.0\linewidth]{fig/emnlp_case.svg}
\caption{Examples of \method in NER task. The left panel of each sub-figure plots the output of the target class $c^*$ for the victim model and the extracted model ($\hat{\mathbf{y}}_{c^*} \text{~vs.~} g(x)$). The right panel of each sub-figure plots the power spectrum value for output of the extracted model($P(f) \text{~vs.~} f$). We also display the $P_{\text{snr}}$ value for signal strength of the extracted model.}
\vspace{-5mm}
\label{fig:case}
\end{figure*}

\paragraph{Baseline} We take the state-of-the-art method DeepJudge \cite{chen2022copy} as a baseline against \method. DeepJudge quantitatively tests the similarities between the victim model and suspect model, then  determines whether the suspect model is a copy based on the testing metrics. Since DeepJudge is designed for continuous signals such as images and audio, we modify the method to apply it to texts. We consider the black-box setting for DeepJudge, and compute Jensen-Shanon Distance ($JSD$) \cite{Fuglede2004JensenShannonDA} for the probing dataset of the victim model and the extracted model. $JSD$ measures the similarity of two probability distributions. We use the probing dataset to query both the victim model and the suspect model, and then calculate $JSD$ of the output layer as follows
$$
\text{\small $JSD\left(\hat{\mathbf{y}}, \hat{\mathbf{q}}\right)=\frac{1}{2|\mathcal{D}_p|} \sum_{x \in \mathcal{D}_p} K\left(\hat{\mathbf{y}}(x), u\right)+K\left(\hat{\mathbf{q}}(x), u\right)$}
$$
where $u=\left(\hat{\mathbf{y}}(x)+\hat{\mathbf{p}}(x)\right) / 2$ and $K(\cdot, \cdot)$ is the Kullback-Leibler divergence. A small $JSD$ value implies similar output distribution of the two models, which further indicates that the suspect model may be distilled from the victim model.

\paragraph{Evaluation} We evaluate the performance of the victim model and the extracted model with accuracy/F1 score. In order to compare \method with DeepJudge in detecting extracted models, we can reduce this binary classification problem to thresholding a particular test score. Since \method and DeepJudge use different scores to detect the extracted model, we set up a series of ranking tasks to show the effect of these scores. For each task, we train 10 extracted models from the watermarked victim model with different random initialization as positive samples, 10 extracted models from the unwatermarked victim model with different random initialization, and 10 models from scratch with true labels as negative samples. For \method, we use the watermark signal strength values $P_{\text{snr}}$ as the score for ranking (identifying whether it is an extracted model); for DeepJudge, we use $JSD$ as the score. Next, we compute the mean average precision (mAP) for the ranking tasks which assesses the model extraction detection performance. A higher mAP means the detecting method can distinguish the victim and the suspect model better.

We show the experiment results in the following subsections.

\subsection{Effectiveness: Is \method able to identify model infringements?}
We evaluate our method in two settings, distillation with soft labels and distillation with hard labels. The results are displayed in Table \ref{table:main}. DeepJudge performs well on SST-2 and MRPC tasks but it can not effectively detect the extracted models in POS Tagging and NER tasks. In contrast, our method can successfully detect the extraction with 100\% mAP across all tasks in both settings. We also present the range of $JSD$ and $P_{\text{snr}}$ in Table \ref{table:new}. Regarding the performance of DeepJudge on POS Tagging and NER tasks, the $JSD$ intervals for positive and negative samples overlap each other, resulting in the aforementioned lower mAP compared to \method. A case in point is DeepJudge-$JSD$-Hard for NER task, where the ranges for negative suspect score and positive suspect score are $[0.009, 0.274]$ and $[0.011, 0.052]$ respectively. The overlapping intervals lead to the imperfect detection result, i.e., mAP $=0.40$. Whereas, \method is able to \emph{perfectly} distinguish between positive and negative suspects. Typically, $P_{\text{snr}}$ for the negative suspect is smaller than 5 while that for the positive suspect is larger than 15.

\subsection{Fidelity: Does \method decrease the performance of the model?}
The results for the model performance at the watermark level $\varepsilon=0.2$ are displayed in Table \ref{table:main}. The perturbed API (victim model with argmax soft/sampling hard) only has a slight performance drop (within 5\%) in comparison to the original one due to the trade-off between detection effectiveness and model performance. For the victim model API, argmax soft exhibits less performance drop than the sampling hard, since the argmax of the soft label remains unchanged with small perturbation. For the extracted model, distillation with soft label tends to have a better accuracy/F1 score than that with hard label. Additionally, the performances of extracted models are very close to those of victim models, a clear manifestation of the distillation success. 

\subsection{Case Study} \label{sec:casestudy}
We present how our method works on some examples in NER task. We fix the victim model and choose different settings for the suspect model across all the examples. For the watermarked ones, we set $f_w = 16, \varepsilon = 0.2 \text{~and~} c^* = 2$. 

In Figure \ref{fig:case} (a), we show how \method works on a suspect model that does not extract the victim model. We select a model trained from scratch with true labels as a negative example. There is no sinusoidal signal in the output of the suspect model hence a small $P_\text{snr}$.

In Figure \ref{fig:case} (b)(c), we illustrate the effect on soft distillation and hard distillation. We use the watermark key $K$ to extract the output of the victim model and suspect model. The extracted model clearly follows the victim model and there is a prominent peak at frequency $f_w$. Note that suspect model distillation with soft labels has a higher $P_{\text{snr}}$ than the one with hard labels. This is because the training process of extracted models can be more effective and faster with soft labels \cite{phuong2019towards}.

In Figure \ref{fig:case} (d), we validate the \emph{secrecy} of our method. If the adversary does not have the secret key, it can not justify what the watermark is or whether there exist watermarks. The output of the victim model and extracted model are almost indiscernible when we use a wrong key to project them given the hash function $g(\cdot)$.

In Figure \ref{fig:case} (e)(f), we demonstrate the generality of our method. Watermarking algorithm should be independent of the dataset and the ML algorithms. In sub-figure (e), a different dataset is used to probe the suspect model. To be specific, we select the second half of the training data as the probing dataset, rather than the first half used in previous experiments. The results imply that \method turns out to work well when we use unseen data to produce the probing dataset for the suspect model. In sub-figure (f), we choose a different backbone RoBERTa \cite{liu2019roberta} for the suspect model, in which the victim model continues to be the BERT model. The high peak in the power spectrum at frequency $f_w$ reveals that \method is still able to detect the signal.

\section{Ablation Study}

\subsection{Does watermark level impact detection?}\label{sec:wmlevel}
\begin{figure}[h]
\centering
\includegraphics[width=0.45\textwidth]{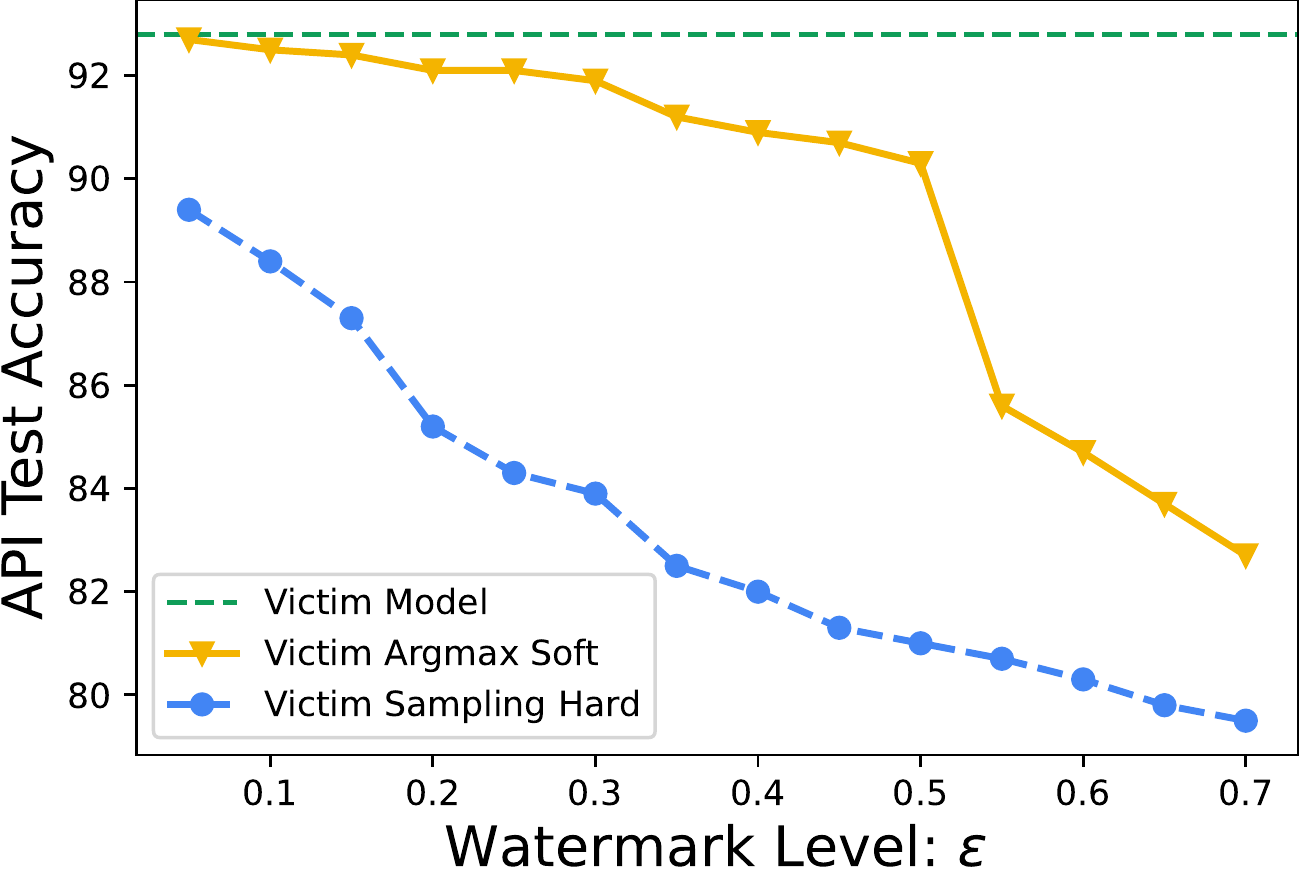}
\caption{Test accuracy of victim model API with different watermark level in SST-2 task. }
\vspace{-2mm}
\label{fig:ab_noise1}
\end{figure}
An important aspect of watermarking is how much perturbation we add to the output of the victim model. Theoretically, a smaller watermark level is associated with a higher accuracy/F1 score of the victim, yet it makes it harder to extract the signal from the probing results. We conduct two experiments to investigate the effect of the watermark level. 

In the first experiment, we vary the watermark level in SST-2 task. According to the Theorem \ref{thm:acc}, the accuracy of the victim model output is bounded and a higher watermark level causes poorer performance.  As shown in Figure \ref{fig:ab_noise1},  when the watermark level rises from 0 to 0.7, the performance drops by around 10 percent. It is worth noting that a big drop of the argmax soft emerges as $\varepsilon$ passes 0.5, which means the argmax of the output is highly likely to be changed in this case.

\begin{figure}[h]
\centering
\includegraphics[width=0.45\textwidth]{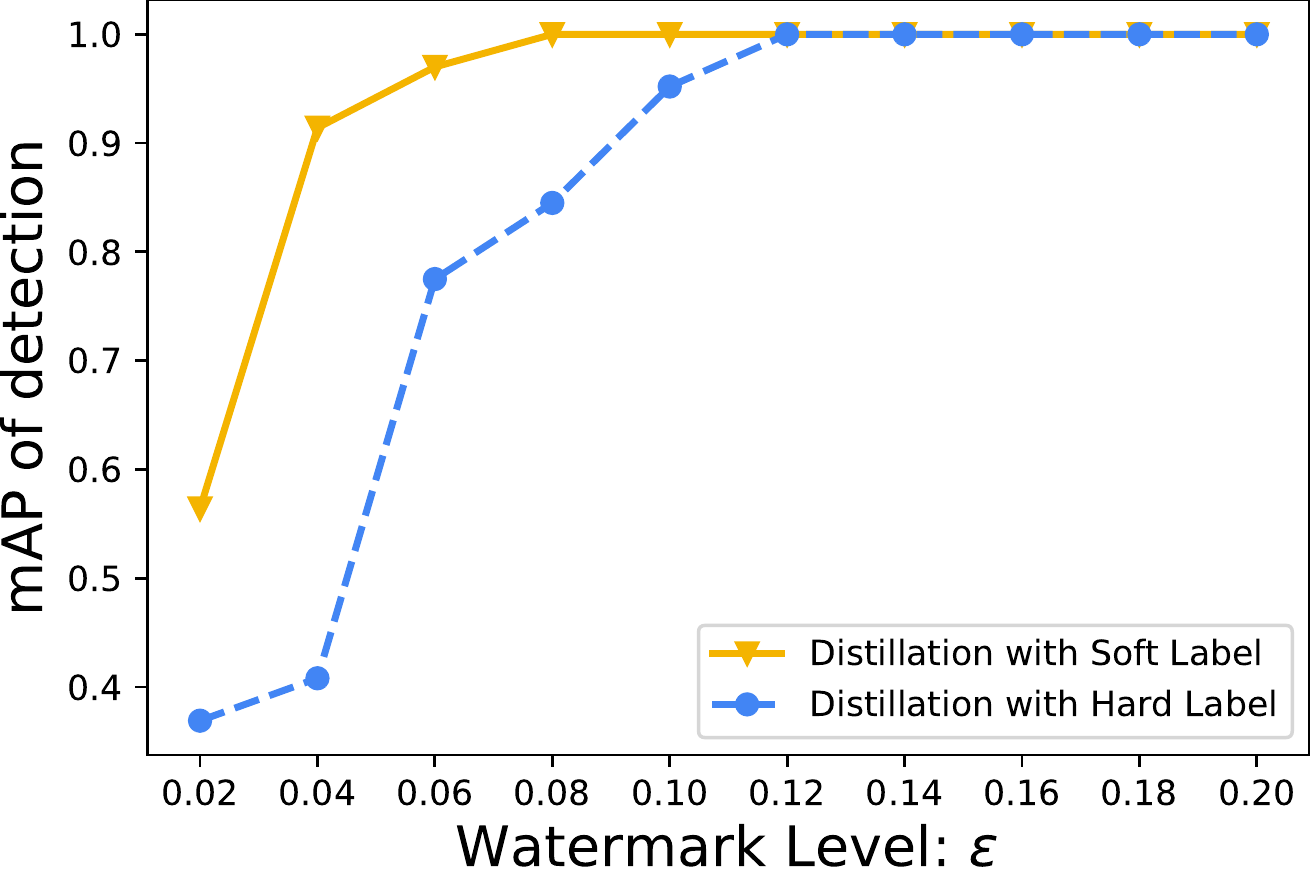}
\caption{Model detection results with different watermark level in NER task.}
\vspace{-2mm}
\label{fig:ab_noise2}
\end{figure}
In the second experiment, we design 10 sets of ranking tasks, and build up 10 positive samples together with 20 negative samples (similar to the setting in Section \ref{sec:task}) for each set in NER task. The watermark level is the only varied parameter across different tasks, ranging from 0.02 to 0.2. We plot the mAP of the detection against the watermark level in Figure \ref{fig:ab_noise2}. When the watermark level $\varepsilon$ is below 0.12, \method can not generate perfect detection of positive and negative suspects, indicating that the adversary may not convey a strong sinusoidal signal at a low watermark level. In this case, \method can not extract the watermark in frequency space and thus fails to detect it successfully. 

These two experiments demonstrate the trade-off between the detection effectiveness and the victim model's performance after watermarking.

\subsection{Do categories affect watermark protection?}
\begin{figure}[h]
\centering
\includegraphics[width=0.45\textwidth]{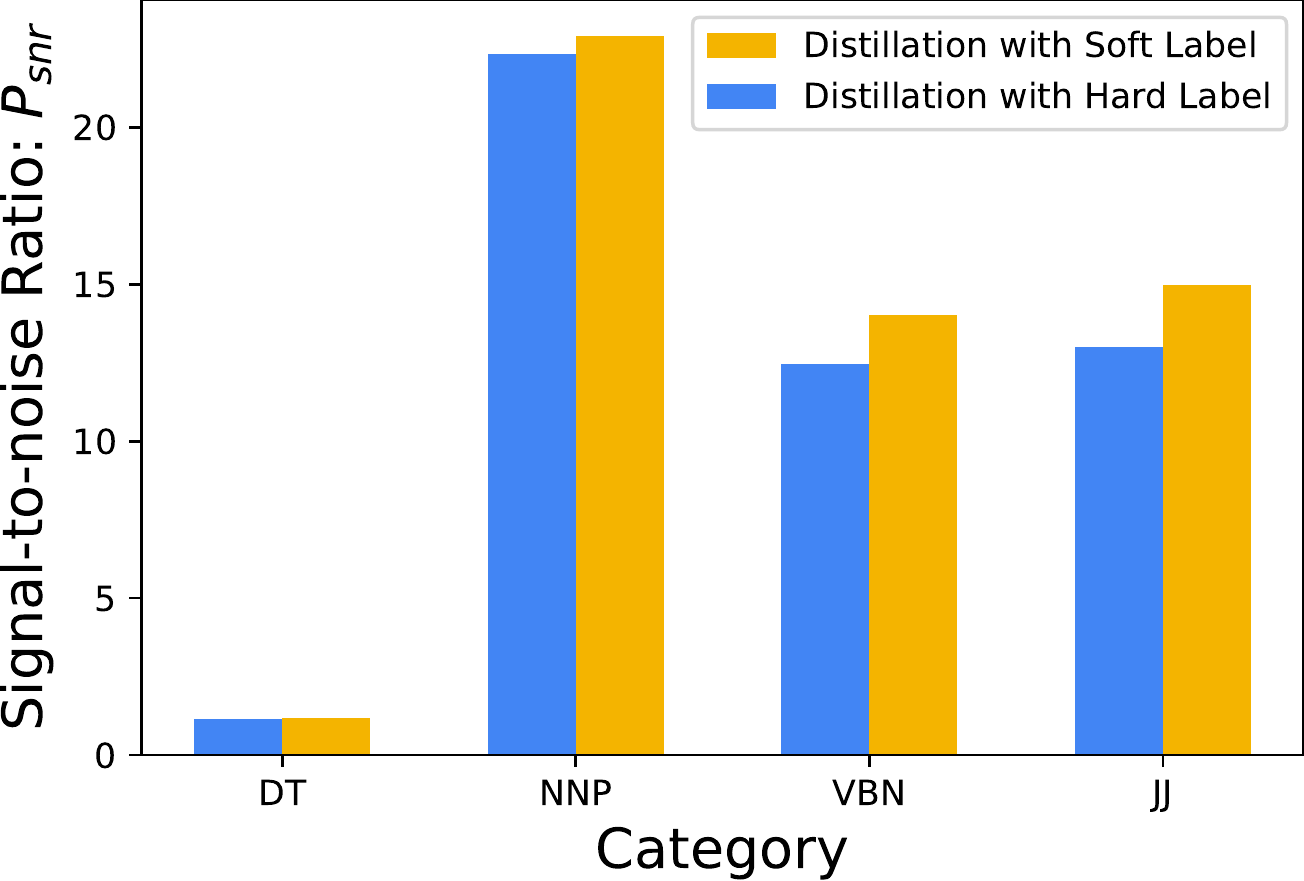}
\caption{Adding watermark to four categories in POS Tagging task. "DT": determiner; "NNP": proper noun, singular; "VBN": verb, past practice; "JJ": adjective.}
\vspace{-2mm}
\label{fig:ab_category}
\end{figure}
We vary the target class $c^*$ of the watermark key $K$ in POS Tagging task. We add watermarks to four different categories and then train the extracted model by soft distillation and hard distillation. The results of the signal-to-noise ratio $P_\text{snr}$ are visualized in Figure \ref{fig:ab_category}. The effect of the watermark will be more salient if the category involves more samples. Since "NNP" covers the most (14.16\%) of all tokens, adding watermark to "NNP" produces the strongest signal. In contrast, the determiners ("DT") category only has a few number of types, such as "the" and "a". As a result, adding watermark to "DT" is ineffective as it is hard to add a periodic signal to a very discrete domain. 

\subsection{How much should be selected for watermarking?}
\begin{figure}[h]
\centering
\includegraphics[width=0.47\textwidth]{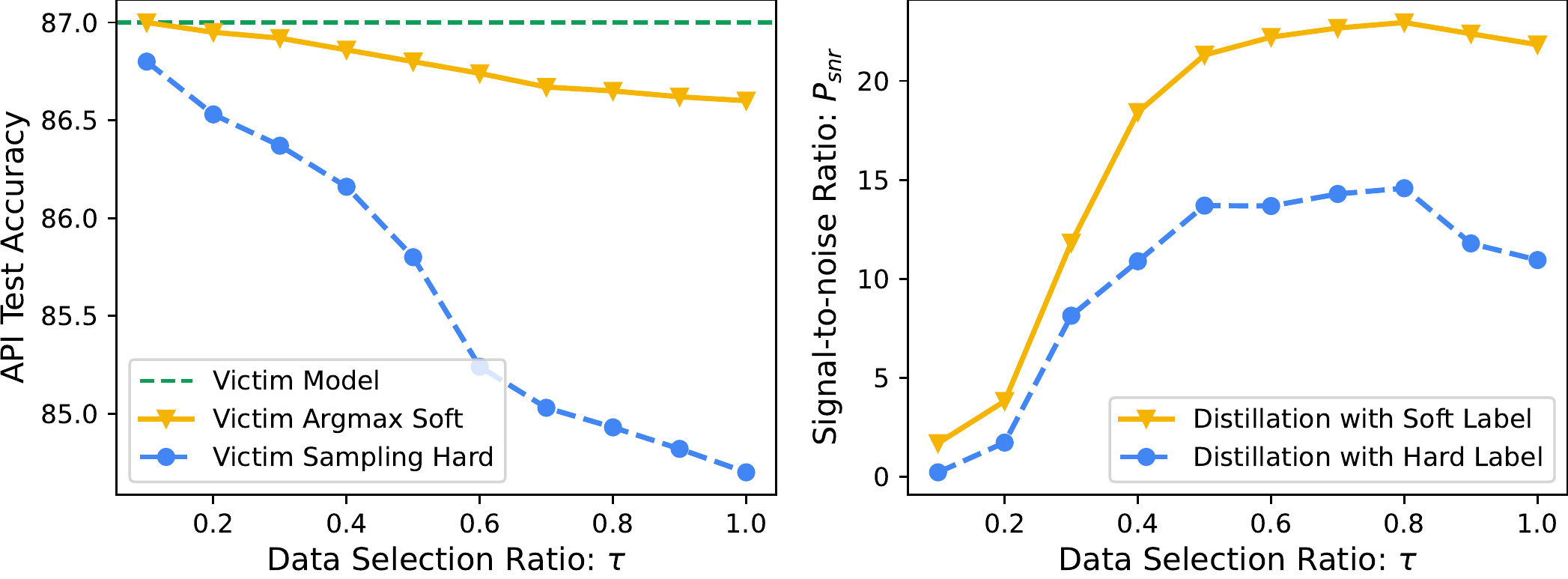}
\caption{Output accuracy of the victim model and signal strength of the extracted model with different data selection ratio $\tau$ in MRPC task.}
\vspace{-2mm}
\label{fig:ab_ratio}
\end{figure}
A critical design of our method is that we apply selection vector $\mathbf{v}_s$ to select a portion of the victim model output to be watermarked. We change the ratio of the watermarked data by tuning the data selection ratio $\tau$ in MRPC task. The results shown in Figure \ref{fig:ab_ratio} indicate that the accuracy of the victim model output falls with a higher data selection ratio, yet it introduces a greater signal strength of the extracted model. This trade-off is similar to the one described in Section \ref{sec:wmlevel}. 0.5 could be a reasonable selection ratio.

% \begin{figure}[htb]
% \centering
% \includegraphics[width=0.30\textwidth]{fig/temp.png}
% \caption{Generality of \method}
% % \vspace{-5mm}
% \label{fig:ab_generality}
% \end{figure}

\section{Conclusion}
\label{sec:conclusion}
In this work, we propose \longmethod(\method), a novel and unified watermarking technique against model extraction attacks on NLP models. By injecting watermarks into the prediction output of the victim model, the model owner can detect the watermark if the adversary distills the protected model.  We prove the theoretical guarantee of \method and show remarkable empirical results on text classification and sequence labeling tasks.

\section*{Limitations}
\label{sec:limit}
1) The watermark detection does not work well when the watermarked data covers only a small amount of the whole training data for the extracted model. 
2) Our method may not work well when the adversary only makes a few queries to the victim model APIs and trains the extracted model with few-shot learning.
3) If the victim model outputs soft labels, even with watermarking, the adversary can take argmax operation to erase the watermark. So it is better to combine watermarks with hard label output in real-world applications.

% EMNLP 2022 requires all submissions to have a section titled ``Limitations'', for discussing the limitations of the paper as a complement to the discussion of strengths in the main text. This section should occur after the conclusion, but before the references. It will not count towards the page limit.  

% The discussion of limitations is mandatory. Papers without a limitation section will be desk-rejected without review.
% ARR-reviewed papers that did not include ``Limitations'' section in their prior submission, should submit a PDF with such a section together with their EMNLP 2022 submission.

% While we are open to different types of limitations, just mentioning that a set of results have been shown for English only probably does not reflect what we expect. 
% Mentioning that the method works mostly for languages with limited morphology, like English, is a much better alternative.
% In addition, limitations such as low scalability to long text, the requirement of large GPU resources, or other things that inspire crucial further investigation are welcome.

\section*{Broader Impact}
\label{sec:impact}
% Scientific work published at EMNLP 2022 must comply with the \href{https://www.aclweb.org/portal/content/acl-code-ethics}{ACL Ethics Policy}. We encourage all authors to include an explicit ethics statement on the broader impact of the work, or other ethical considerations after the conclusion but before the references. The ethics statement will not count toward the page limit (8 pages for long, 4 pages for short papers).
This work will alleviate ethical concerns of commercial NLP models. 
This paper provides one promising solution to an important aspect of NLP: how to protect the intellectual property of trained NLP models. Companies with NLP web services can apply our method to protect their models from model extraction attacks.

\section*{Acknowledgements}
XZ was supported by UCSB Chancellor’s Fellowship. The authors would like to thank Yang Gao for polishing up the draft and Dan Qiao for the helpful discussion.

\bibliography{paper}
\bibliographystyle{acl_natbib}
\newpage
\appendix

\section{Appendix}
\label{sec:appendix}
\subsection{Proof for the Theorem 1}
\begin{customthm}{A.1}[Restate Theorem \ref{thm:acc}]\label{thm:A1}
Without loss of generality, set target class $c^* = 1$, so that $\hat{p} = \hat{\mathbf{p}}_1(x), \hat{y} = \hat{\mathbf{y}}_1, z(x) = \mathbf{z}_1(x)$.
Assume $\hat{p}(x)$ is calibrated, i.e., $\mathbb{E}[y|\hat{p}(x)=a]=a, ~\forall 0\leq a\leq 1$, the argmax soft label of the victim model is $\hat{y}_s = \mathbbm{1}\{\frac{\hat{p}(x)+\varepsilon(1+z(x))}{1+2\varepsilon} > 0.5\}$ and the sampling hard label of the victim output is $\hat{y}_h \sim \operatorname{Ber}(\frac{\hat{p}(x)+\varepsilon(1+z(x))}{1+2\varepsilon})$. For a fixed $\mathbf{v}_k$, given that 
$z(x) = \cos \left(f_{w} g(\mathbf{v}_k, x, \mathbf{M})\right)\in [-1, 1]$ and the data selection ratio is set to $\tau$, then 
\method argmax soft label and sampling hard label satisfy: 
\begin{align}
    \mathbb{E}_{\mathbf{v}_k}&\left[ \text{Acc}(\text{Argmax Soft}) \right] \geq \text{Acc}(\text{Victim})\notag\\
    & - \tau(0.5+\varepsilon)\mathbb{P}[0.5-\varepsilon\leq \hat{p}\leq 0.5+\varepsilon],\\
    %\notag\\
    \mathbb{E}_{\mathbf{v}_k}&\left[\text{Acc}(\text{Sampling Hard})\right] \geq (1-\tau)\text{Acc}(\text{Victim})\notag\\
    & + \frac{\tau}{1+2\varepsilon} \mathbb{E}\left[2\hat{p}^2 -2\hat{p} + 1\right].
\end{align}
\end{customthm}

\begin{proof}
We first prove the argmax soft label case with $\tau=1$.
\begin{align*}
    &\mathbb{E}\left[\mathbbm{1}(\hat{y}_s=y)\right] \\
    =& \mathbb{E}\left[\mathbb{P}(\hat{y}_s=y|x)\right]\\
    =& \mathbb{E}\left[\mathbb{P}(\hat{y}_s=1, y=1|x)+\mathbb{P}(\hat{y}_s=0, y=0|x)\right]\\
    =& \mathbb{E}\left[\mathbb{P}(\hat{y}_s=1|x)\mathbb{P}(y=1|x)\right.\\
    &+\left.\mathbb{P}(\hat{y}_s=0|x)\mathbb{P}(y=0|x)\right]\\
    =& \mathbb{E}\left[\mathbbm{1}\{\hat{p}+\varepsilon z(x)> 0.5\} \mathbb{P}(y=1|x)\right.\\
    &+\left.\mathbbm{1}\{\hat{p}+\varepsilon z(x)\leq0.5\}\left(1-\mathbb{P}(y=1|x)\right)\right]\\
    =& \mathbb{E}\big[\mathbb{E}\left[\mathbbm{1}\{\hat{p}+\varepsilon z(x)> 0.5\} \mathbb{P}(y=1|x)\right.\\
    &+\left.\mathbbm{1}\{\hat{p}+\varepsilon z(x)\leq0.5\}\left(1-\mathbb{P}(y=1|x)\right)|\hat{p}\right]\big]\\
    \geq& \mathbb{E}\big[\mathbb{E}\left[\mathbbm{1}\{\hat{p}-\varepsilon > 0.5\}\mathbb{P}(y=1|x)|\hat{p}\right]\\
    &+ \mathbb{E}\left[\mathbbm{1}\{\hat{p}+\varepsilon \leq 0.5\} (1-\mathbb{P}(y=1|x))|\hat{p}\right]\big]\\
    =& \mathbb{E}\big[\mathbbm{1}\{\hat{p}-\varepsilon > 0.5\}\mathbb{E}\left[\mathbb{P}(y=1|x)|\hat{p}\right]\\
    &+ \mathbbm{1}\{\hat{p}+\varepsilon \leq 0.5\}\mathbb{E}\left[1-\mathbb{P}(y=1|x)|\hat{p}\right]\big]\\
    =& \mathbb{E}\big[\mathbbm{1}\{\hat{p} > 0.5+\varepsilon\}\hat{p} + \mathbbm{1}\{\hat{p} \leq 0.5-\varepsilon\}(1-\hat{p})\big]\\
    =& \underbrace{\mathbb{E}\left[\mathbbm{1}\{\hat{p}>0.5\}\hat{p}+\mathbbm{1}\{\hat{p}\leq0.5\}(1-\hat{p})\right]}_{\text{Accuracy of victim model without watermark}}\\
    &-\mathbb{E}\left[\mathbbm{1}\{0.5<\hat{p}\leq 0.5+\varepsilon\}\hat{p}\right]\\
    &+\mathbb{E}\left[\mathbbm{1}\{0.5-\varepsilon\leq \hat{p}\leq 0.5\}(1-\hat{p})\right]\\
    \geq& \text{Acc}(\text{Victim Model}) \\
    &- (0.5+\varepsilon)\mathbb{P}(0.5-\varepsilon\leq \hat{p}\leq 0.5+\varepsilon)
\end{align*}
where the first "$\geq$" follows from $|z(x)| \leq 1$; the third "$=$" follows from the conditional independence of $\hat{y}_s$ and $y$ given $x$; the seventh "$=$" follows from the calibration assumption, i.e. $\mathbb{E}\left[\mathbb{P}(y=1|x)|\hat{p}(x)\right] = \hat{p}(x)$.

Notice that over the distribution of $\mathbf{v}_s$ selects every unique $x$ with probability $\tau$ independently to everything else, by exchanging the order of expectation, it is easy to prove that the expected accuracy is a convex combination of the accuracy of the victim model (with weight $1-\tau$) and the case above (with weight $\tau$). This completes the proof for argmax soft label.

We then start by analyzing the sampling hard label case with $\tau=1$.
\begin{align*}
    &\mathbb{E}\left[\mathbbm{1}(\hat{y}=y)\right] \\
    =& \mathbb{E}\left[\mathbb{P}(\hat{y}=y|x)\right]\\
    =& \mathbb{E}\left[\mathbb{P}(\hat{y}=1, y=1|x)+\mathbb{P}(\hat{y}=0, y=0|x)\right]\\
    =& \mathbb{E}\left[\mathbb{E}(\hat{y}|x)\mathbb{E}(y|x)+\mathbb{E}(1-\hat{y}|x)\mathbb{E}(1-y|x)\right]\\
    =& \mathbb{E}\left[\left(\frac{\hat{p}}{1+2\varepsilon}+\frac{\varepsilon(1+z(x))}{1+2\varepsilon}\right)\mathbb{E}(y|x)\right. \\
    &+\left. \left(\frac{1-\hat{p}}{1+2\varepsilon}+\frac{\varepsilon(1-z(x))}{1+2\varepsilon}\right)\mathbb{E}(1-y|x)\right]\\
    =& \text{\scriptsize$\frac{1}{1+2\varepsilon}\underbrace{\mathbb{E}\left[\hat{p}\mathbb{E}(y|x)+(1-\hat{p})\mathbb{E}(1-y|x)\right]}_{A}$}\\
    &+ \text{\scriptsize ${\frac{\varepsilon}{1+2\varepsilon} \underbrace{\mathbb{E}\left[(1+z(x))\mathbb{E}(y|x)+(1-z(x))\mathbb{E}(1-y|x)\right]}_{B}}$}
\end{align*}
\begin{align*}
    A =& \mathbb{E}\left[\mathbb{E}\left[\hat{p}\mathbb{E}(y|x)+(1-\hat{p})\mathbb{E}(1-y|x)|\hat{p}\right]\right]\\
    =& \mathbb{E}\left[\hat{p}\mathbb{E}(y|\hat{p})+(1-\hat{p})\mathbb{E}(1-y|\hat{p})\right]\\
    =& \mathbb{E}\left[\hat{p}^2+(1-\hat{p})^2\right]\\
    =& \mathbb{E}\left[2\hat{p}^2-2\hat{p}+1\right]
\end{align*}
where the third line follows from the calibration assumption, i.e., $\mathbb{E}[y|\hat{p}(x)=a]=a$.
\begin{align*}
    B = &\mathbb{E}\left[\mathbb{E}(y|x)+\mathbb{E}(y|x)z(x)+1-z(x)\right.\\
    &-\mathbb{E}(y|x) +\left.\mathbb{E}(y|x)z(x)\right]\\
    = & 1+\mathbb{E}\left[\left(2\mathbb{E}(y|x)-1\right)z(x)\right]\\
    \geq & 0
\end{align*}
where the last line follows from the facts that $|z(x)|\leq 1$ and $|2\mathbb{E}(y|x)-1| \leq 1$.

Finally, notice that for each $x$ the probability to be chosen to add watermark and to sample the output is $\tau$ independently, thus the expected accuracy is the convex combination of the accuracy of the victim model and that of the fully watermarked model.
\end{proof}

\subsection{Distribution Property}\label{lemma1}
\begin{lemma}
Assume $\mathbf{v} \sim \mathcal{U}(0,1),~ \mathbf{v}\in\mathbb{R}^n$ and $\mathbf{x} \sim \mathcal{N}(0,1),~ \mathbf{x}\in \mathbb{R}^n$, where $\mathbf{v}$ and $\mathbf{x}$ are both $i.i.d.$ and independent of each other. Then we have:
$$\frac{1}{\sqrt{n}}\mathbf{v}\cdot \mathbf{x} 	\rightsquigarrow \mathcal{N}\left(0, \frac{1}{3}\right),~ n\rightarrow \infty$$
\end{lemma}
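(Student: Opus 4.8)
The plan is to recognize the inner product as a sum of i.i.d.\ terms and invoke the classical central limit theorem. Write $\mathbf{v}\cdot\mathbf{x} = \sum_{i=1}^{n} v_i x_i$ and set $W_i := v_i x_i$. Since the coordinates of $\mathbf{v}$ and of $\mathbf{x}$ are each i.i.d.\ and the two vectors are mutually independent, the variables $W_1,\dots,W_n$ are themselves i.i.d.; so the whole problem reduces to computing the first two moments of a single $W_i$ and checking the second is finite.

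First I would compute $\mathbb{E}[W_i] = \mathbb{E}[v_i]\,\mathbb{E}[x_i] = \tfrac{1}{2}\cdot 0 = 0$, using independence of $v_i$ and $x_i$. Next, $\mathrm{Var}(W_i) = \mathbb{E}[W_i^2] = \mathbb{E}[v_i^2]\,\mathbb{E}[x_i^2] = \bigl(\int_0^1 t^2\,dt\bigr)\cdot 1 = \tfrac{1}{3}$, which is in particular finite. (All moments of $W_i$ in fact exist, since $v_i$ is bounded and $x_i$ is Gaussian, but finite variance is all that is needed.)

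With $\{W_i\}$ i.i.d., mean $0$, and variance $\tfrac{1}{3}\in(0,\infty)$, the Lindeberg--L\'evy central limit theorem gives $\frac{1}{\sqrt{n}}\sum_{i=1}^{n} W_i = \frac{1}{\sqrt{n}}\,\mathbf{v}\cdot\mathbf{x} \rightsquigarrow \mathcal{N}(0,\tfrac{1}{3})$ as $n\to\infty$, which is the claim. Equivalently $\mathbf{v}\cdot\mathbf{x}$ is approximately $\mathcal{N}(0,\tfrac{n}{3})$ for large $n$, which is the normalization used for $\mathbf{v}_k^\top\mathbf{M}_i$ and $\mathbf{v}_s^\top\mathbf{M}_i$ in Section~\ref{sec:approach}.

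There is no genuine obstacle: the only points to get right are the elementary moment computation $\mathbb{E}[v_i^2] = \tfrac{1}{3}$ and the observation that independence of $\mathbf{v}$ and $\mathbf{x}$ makes the summands truly i.i.d., so the plain i.i.d.\ CLT (rather than a triangular-array / Lindeberg version) suffices. If instead one wanted the statement conditional on a fixed realization of $\mathbf{v}$ --- closer to how the lemma is applied, where $\mathbf{v}_k$ is fixed and $\mathbf{M}$ is random --- one would condition on $\mathbf{v}$, note that then $\sum_i v_i x_i \mid \mathbf{v} \sim \mathcal{N}(0,\|\mathbf{v}\|^2)$ \emph{exactly}, and combine this with $\|\mathbf{v}\|^2/n \to \tfrac{1}{3}$ a.s.\ by the strong law of large numbers; but the unconditional convergence as stated needs nothing beyond the two-moment check above.
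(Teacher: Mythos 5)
Your proof is correct and follows essentially the same route as the paper's: decompose the inner product into i.i.d.\ products $v_i x_i$, compute mean $0$ and variance $\mathbb{E}[v_i^2]\mathbb{E}[x_i^2]=\tfrac{1}{3}$, and apply the classical CLT. Your closing remark about the exact conditional distribution $\mathcal{N}(0,\|\mathbf{v}\|^2)$ given a fixed $\mathbf{v}$ is a nice additional observation relevant to how the lemma is actually used, but it is not part of the paper's argument and is not needed for the statement as written.
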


\begin{proof}
Let $u_i=\mathbf{v}_i\mathbf{x}_i,~ i\in{1,2,\ldots,n}$. By assumption, $u_i$ are $i.i.d.$. Clearly, the first and second moments are bounded, so the claim follows from the classical central limit theorem, 
\begin{align*}
    \sqrt{n}\Bar{u}_n &= \frac{\sum_{i=1}^n u_i}{\sqrt{n}} \rightsquigarrow \mathcal{N}\left(\mu, \sigma^2\right)~\text{ as }~n\rightarrow \infty
\end{align*}
where
\begin{align*}
\mu &= \mathbb{E}\left(u_i\right) = \mathbb{E}\left(\mathbf{v}_i\mathbf{x}_i\right) = \mathbb{E}\left(\mathbf{v}_i\right) \mathbb{E}\left(\mathbf{x}_i\right) \\
&= 0 \\
\sigma^2 &= \operatorname{Var}(u_i) = \mathbb{E}\left(u_i^2\right) - \left(\mathbb{E}\left(u_i\right)\right)^2\\
&= \mathbb{E}\left(u_i^2\right) = \mathbb{E}\left(\mathbf{v}_i^2\mathbf{x}_i^2\right) = \mathbb{E}\left(\mathbf{v}_i^2\right)\mathbb{E}\left(\mathbf{x}_i^2\right)\\
&= \frac{1}{3}
\end{align*}
It follows that given large $n$
\begin{align*}
    \frac{1}{\sqrt{n}}\mathbf{v}\cdot \mathbf{x} \rightsquigarrow \mathcal{N}\left(0, \frac{1}{3}\right)
\end{align*}
\end{proof}

\subsection{Modified Softmax Properties}\label{lemma2}
\begin{lemma}[Lemma 1 in \cite{charette2022cosine}]
Let $\hat{\mathbf{p}}$ be the softmax output of a model $\mathcal{V}$, then the modified softmax $\hat{\mathbf{y}}$, as defined in Equation \ref{eq:softmax} satisfies $0 \leq \hat{\mathbf{y}}_i\leq 1$ and $\sum_{i=1}^{m} \hat{\mathbf{y}}_i = 1$.
\begin{proof}
Notice that in Equation \ref{eq:softmax}, when $g(\mathbf{v}_s, x, \mathbf{M}) > \tau$, $\hat{\mathbf{y}}$ = $\hat{\mathbf{p}}$, so that it satisfies the property above.

By the definition of softmax, for all class $c \in \{1, \dots, m\}$ we have
$$
0 \leq \hat{\mathbf{p}}_{c} \leq 1, 
-1 \leq \mathbf{z}_{c}(x) \leq 1 .
$$
Therefore, when $c=c^{*}$, we have
$$
0 \leq \hat{\mathbf{p}}_{c}+\varepsilon\left(1+\mathbf{z}_{c}(x)\right) \leq 1+2 \varepsilon,
$$
and then
$$
0 \leq \frac{\hat{\mathbf{p}}_{c}+\varepsilon\left(1+\mathbf{z}_{c}(x)\right)}{1+2 \varepsilon} \leq 1 .
$$
When $c \neq c^{*}$, since $m \geq 2$, we have
$$
0 \leq \hat{\mathbf{p}}_{c}+\frac{\varepsilon\left(1+\mathbf{z}_{c}(x)\right)}{m-1} \leq 1+\frac{2 \varepsilon}{m-1} \leq 1+2 \varepsilon
$$
and then
$$
0 \leq \frac{\hat{\mathbf{p}}_{c}+\frac{\varepsilon\left(1+\mathbf{z}_{c}(x)\right)}{m-1}}{1+2 \varepsilon} \leq 1 .
$$
Thus, $\hat{\mathbf{q}}$ satisfies $0 \leq \hat{\mathbf{y}}_i\leq 1$.

To prove $\sum_{i=1}^{m} \hat{\mathbf{y}}_i = 1$, we use the fact that $\mathbf{z}_{c^{*}}+$ $\mathbf{z}_{i \neq c^{*}}=0$ and obtain

\begin{align*}
\sum_{i=1}^{c} \hat{\mathbf{y}}_i&=\frac{\hat{\mathbf{p}}_{c^{*}}+\varepsilon\left(1+\mathbf{z}_{c^{*}}\right)}{1+2 \varepsilon}+\sum_{i \neq c^{*}} \frac{\hat{\mathbf{p}}_{i}+\frac{\varepsilon\left(1+\mathbf{z}_{i}\right)}{m-1}}{1+2 \varepsilon} \\
&=\sum_{i=1}^{m} \frac{\hat{\mathbf{p}}_{i}}{1+2 \varepsilon}+\sum_{i \neq c^{*}} \frac{\varepsilon\left(1+\mathbf{z}_{c^{*}}+1+\mathbf{z}_{i}\right)}{(m-1)(1+2 \varepsilon)} \\
&=\frac{1}{1+2 \varepsilon}+\frac{2 \varepsilon}{1+2 \varepsilon} \\
&=1
\end{align*}
\end{proof}
\end{lemma}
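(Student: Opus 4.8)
The plan is to check the two defining properties of a probability vector — coordinate-wise membership in $[0,1]$ and summation to $1$ — directly from the piecewise formula in Equation~\eqref{eq:softmax}, splitting on whether the query is selected for watermarking, i.e.\ on whether $g(\mathbf{v}_s,x,\mathbf{M})>\tau$ or $g(\mathbf{v}_s,x,\mathbf{M})\le\tau$. On the unselected branch the output is literally $\hat{\mathbf{y}}=\hat{\mathbf{p}}$, which is a softmax vector by hypothesis, so both properties are immediate and all the real work lies in the watermarked branch $g(\mathbf{v}_s,x,\mathbf{M})\le\tau$.

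For the box constraints on the watermarked branch I would use only the two elementary inequalities $0\le\hat{\mathbf{p}}_c\le1$ (softmax) and $-1\le\mathbf{z}_c(x)\le1$ (the cosine), hence $0\le 1+\mathbf{z}_c(x)\le2$. For $c=c^*$ the numerator $\hat{\mathbf{p}}_{c^*}+\varepsilon(1+\mathbf{z}_{c^*}(x))$ lies in $[0,\,1+2\varepsilon]$, so dividing by the strictly positive constant $1+2\varepsilon$ keeps $\hat{\mathbf{y}}_{c^*}\in[0,1]$. For $c\ne c^*$ the numerator $\hat{\mathbf{p}}_c+\tfrac{\varepsilon(1+\mathbf{z}_c(x))}{m-1}$ lies in $[0,\,1+\tfrac{2\varepsilon}{m-1}]$; here I invoke $m\ge2$ so that $\tfrac{2\varepsilon}{m-1}\le2\varepsilon$, which pulls the numerator back inside $[0,\,1+2\varepsilon]$ and, after normalization, gives $\hat{\mathbf{y}}_c\in[0,1]$.

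For the normalization I would factor out $\tfrac{1}{1+2\varepsilon}$ and collect, reducing $\sum_{i=1}^m\hat{\mathbf{y}}_i$ to $\tfrac{1}{1+2\varepsilon}\bigl(\sum_{i=1}^m\hat{\mathbf{p}}_i+\varepsilon(1+\mathbf{z}_{c^*}(x))+\tfrac{\varepsilon}{m-1}\sum_{i\ne c^*}(1+\mathbf{z}_i(x))\bigr)$. The one structural step is the phase relation built into the definition of $\mathbf{z}$: the $+\pi$ shift forces $\mathbf{z}_i(x)=-\mathbf{z}_{c^*}(x)$ for every $i\ne c^*$, so $\sum_{i\ne c^*}(1+\mathbf{z}_i(x))=(m-1)\bigl(1-\mathbf{z}_{c^*}(x)\bigr)$ and the signal terms cancel in pairs, $\varepsilon(1+\mathbf{z}_{c^*}(x))+\varepsilon(1-\mathbf{z}_{c^*}(x))=2\varepsilon$. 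Together with $\sum_i\hat{\mathbf{p}}_i=1$ the bracket equals $1+2\varepsilon$, so the sum is $1$.

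The main obstacle is purely bookkeeping rather than conceptual: one must allocate the off-target perturbation with the $\tfrac{1}{m-1}$ weight and then use the antipodal relation $\mathbf{z}_{c^*}(x)+\mathbf{z}_{i\ne c^*}(x)=0$ both to cancel the signal inside the sum and to keep each off-target coordinate nonnegative, with the $m\ge2$ hypothesis being exactly what the off-target box constraint needs. This is Lemma~1 of \citet{charette2022cosine} transcribed to our normalization and to the selection-vector branch, so no new idea is needed beyond tracking these points carefully.
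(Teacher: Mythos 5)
Your proposal is correct and follows essentially the same route as the paper's proof: the same case split on the selection threshold, the same elementary bounds $0\le\hat{\mathbf{p}}_c\le1$ and $-1\le\mathbf{z}_c(x)\le1$ with the $m\ge2$ observation for the off-target coordinates, and the same cancellation via the antipodal relation $\mathbf{z}_{i\neq c^*}(x)=-\mathbf{z}_{c^*}(x)$ to show the coordinates sum to $1$. The only difference is cosmetic bookkeeping in how the signal terms are paired before cancelling, so no further comment is needed.
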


\subsection{Experiment Details} \label{sec:details}
We provide more details for the experiments in this section. 

We build our classification models upon \texttt{bert-base-uncased} from Hugging Face\footnote{https://huggingface.co/}. The model contains 110M parameters. We add a dropout layer before the last linear layer with a dropout rate of 0.5. We implement \method in PyTorch 1.11.0 on a server with 4 NVIDIA TITAN-Xp GPUs. We set batch size to 8 for SST-2 and MRPC tasks, and 32 for POS Tagging and NER tasks. 

We train the victim model using AdamW \cite{Loshchilov2019DecoupledWD} optimizer with learning rate 1e-5 and epsilon 1e-8. Each victim model is trained 40 epochs and the one with the best validation results is chosen. 

Regarding the extracted model, we use half of the training data to query the victim model and obtain the labeled dataset. Then the extracted model is trained with Adam \cite{Kingma2015AdamAM} optimizer for 20 epochs with learning rate 5e-5. The average training time is 3 minutes for each epoch.

We show the results for RoBERTa model in Section \ref{sec:casestudy}. In this setting, we choose \texttt{roberta-base} from Hugging Face, which has 125M parameters.

\end{document}